\documentclass{article}
\usepackage{makecell}
\usepackage{booktabs}
\usepackage{placeins}
\usepackage{amsmath}
\usepackage{wrapfig}

\usepackage[table]{xcolor}
\usepackage{array}

\DeclareMathOperator*{\argmin}{arg\,min}
\DeclareMathOperator{\msign}{msign}
\usepackage[preprint]{paper}

\usepackage[utf8]{inputenc} 
\usepackage[T1]{fontenc}    
\usepackage{hyperref}       
\hypersetup{%
    pdfborder = {0 0 0},
    colorlinks,
    citecolor=blue,
    linkcolor=blue,
}
\usepackage{url}            
\usepackage{booktabs}       
\usepackage{array}          
\usepackage{colortbl}       
\usepackage{amsfonts}       
\usepackage{nicefrac}       
\usepackage{microtype}      
\usepackage{xcolor}         
\usepackage{amsmath}
\usepackage{graphicx}
\usepackage[linesnumbered,ruled,vlined]{algorithm2e}
\usepackage{amsthm}
\usepackage{thmtools,thm-restate}

\newtheorem{lemma}{Lemma}

\usepackage{mathtools}

\title{Approximate Muon with low-rank adapters}

\author{%
Ben Anson\\
University of Bristol\\
School of Mathematics\\
\texttt{ben.lja@proton.me}
\And
Conor Houghton\\
University of Bristol\\
School of Engineering Mathematics and Technology\\
\texttt{conor.houghton@bristol.ac.uk}
\And
Edward Milsom\\
University of Bath\\
Department of Computer Science\\
\texttt{em846@bath.ac.uk}
}

\begin{document}

\maketitle

\begin{abstract}
The Muon optimizer shows clear benefits versus alternatives when pretraining neural networks. However, it is used less frequently for parameter-efficient fine-tuning (PEFT).
One potential reason is that the most common PEFT method, LoRA, does not naturally combine with Muon since it is not mathematically possible to orthogonalize the weight update given by a low-rank parameterization. In this paper, we address this issue by approximating the solution to a relaxed Muon objective in the low-rank setting via linearization and then least-squares. We provide an efficient implementation that uses matmul operations only, as opposed to more complex linear algebra decomposition routines. Our method, sMuon (small Muon), performs favourably across SFT and a ReLoRA pretraining experiment. While results are model- and eval-dependent, we find overall that using Muon for low-rank fine-tuning provides moderate performance improvements.\end{abstract}

\section{Introduction}
Muon~\citep{jordan2024muon} is a neural network optimizer which orthogonalizes updates at each layer, equalizing all singular values. This can be interpreted as steepest descent under the spectral norm \citep{bernstein2025modular}, thereby controlling the worst-case change to the layer as a linear operator, rather than viewing it as a flattened vector. Muon has demonstrated excellent results in the pretraining of LLMs, often outperforming AdamW, leading to its increasing adoption~\citep{liu2025muon,shah2025practical,team2025kimik2,team2026kimi,xu2026deepseek}.

However, Muon is less commonly used for fine-tuning (FT), particularly PEFT (Parameter-Efficient Fine-Tuning, which reduces compute resource requirements). There are various reasons for this, including the fact that Muon FT has been shown to perform better with Muon-pretrained base models~\citep{liu2025muon,qu2026can}, which are currently less common than AdamW-pretrained models. Perhaps the most glaring issue is that Muon does not straightforwardly combine with LoRA~\citep{hu2022lora}, the dominant PEFT method. In principle, Muon requires that we orthogonalize the descent step for the whole layer, but this is not generally realizable in the LoRA parameterization. Some approximation is therefore inevitable. The simplest and most common workaround is `per-factor Muon', which ignores the geometry of the whole layer and orthogonalizes changes to $B$ and $A$ separately~\citep{qu2026can,janson2026stabilizing,kang2026uniform}.
Other works retain the whole layer's geometry and introduce an approximation elsewhere,
with~\citet{cesista2026lora} relaxing the Muon objective using the triangle inequality, and \citet{bogachev2025lora} recasting the problem as Riemannian optimization on a low-rank manifold.
\begin{wrapfigure}{r}{0.5\linewidth}
  \centering
    \includegraphics[width=\linewidth]{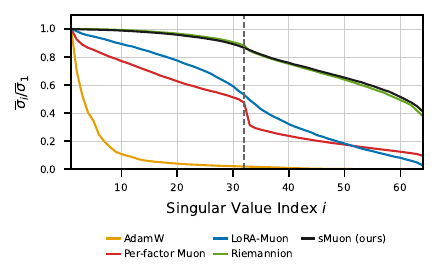}
    \caption{Spectra of the weight change for different LoRA optimizers. We show the average singular values of the weight updates (attention weight, layer 6 of 12), during ReLoRA pretraining ($r=32$). We normalize by $\overline{\sigma}_1$, and denote the LoRA rank by the dashed grey line.}
    \label{fig:mlp_svs}
\end{wrapfigure}
However, we feel that alternative approaches that more closely fit with Muon's original `steepest descent under the spectral norm' motivation are still underexplored.

In this paper, we linearize the low-rank Muon objective to find the optimal descent step within the available parameter space. This solution effectively orthogonalizes the full layer's gradient and projects it onto the appropriate low-rank subspace. Empirically, we find this retains a more uniform spectrum than alternatives such as per-factor Muon (Fig.~\ref{fig:mlp_svs}) and LoRA-Muon~\citep{cesista2026lora}. Furthermore, unlike Riemannion~\citep{bogachev2025lora}, our approach avoids expensive matrix decompositions. While our solution to the low-rank Muon problem is clean (Alg.~\ref{alg:conceptual}), a naive implementation is computationally inefficient, as it requires orthogonalizing a large matrix.
To address this, we introduce a mathematically equivalent formulation (Alg.~\ref{alg:matmul_only}) that is both fast and memory-efficient. By relying entirely on matrix multiplications, we are able to avoid the overhead from QR and SVD.
Finally, we show that our optimizer, sMuon, achieves competitive performance across a range of commonsense and coding SFT benchmarks, as well as in ReLoRA pretraining experiments.

Our contributions are as follows:
\begin{itemize}
  \setlength{\itemsep}{1pt}\setlength{\parskip}{0pt}
  \item We derive a method for training LoRA adapters with Muon by taking a first-order approximation of the spectral-norm-regularized steepest descent objective. The method is available in closed form, and requires less optimizer state than AdamW.
  \item We provide a matmul-only algorithm of the method, sMuon, that is efficient for modern hardware, with comparable speed to other Muon methods such as per-factor Muon and LoRA-Muon, while being significantly faster than Riemannion for larger LoRA rank.
  \item We run SFT experiments on four base models, three AdamW-pretrained and one Muon-pretrained, over eleven benchmarks to compare low-rank Muon optimizers. While results are model-dependent, we find that our optimizer gives strong performance, performing best on the Muon-pretrained model with the top accuracy in six out of eleven tasks with Moonlight-16B-A3B.
  \item In the ReLoRA pretraining setting, we illustrate that Muon LoRA methods are generally stronger than AdamW LoRA. Specifically we find that our method attains the lowest final validation loss, tying with Riemannion, but being $\sim\!30\%$ faster.
\end{itemize}

\FloatBarrier
\section{Related Work}
\paragraph{Muon and spectral steepest descent}
Muon orthogonalizes the gradient descent step for each weight matrix, so that the singular values of weight updates are equal~\citep{jordan2024muon}. This update is exactly steepest descent under the spectral norm~\citep{bernstein2024old,bernstein2025deriving}. Muon has been shown to scale to billion- and trillion-parameter models for LLM pretraining, with considerable efficiency gains over AdamW~\citep{liu2025muon,team2025kimik2}. 

\paragraph{Muon for fine-tuning and optimizer mismatch}
Although Muon excels at pretraining, it is used far less for fine-tuning. \citet{liu2025muon} note that fine-tuning an AdamW-pretrained model with Muon (or vice-versa) degrades performance. \citet{qu2026can} suggest that the differing implicit biases of Adam and Muon are to blame, but that constraining the update (e.g. via LoRA) narrows the gap between Muon and AdamW fine-tuning. Muon's increasing adoption makes this question more important than ever, and so in this paper we evaluate on both AdamW- and Muon-pretrained base models.

\paragraph{Geometry-aware and Muon-based LoRA optimizers}
The most closely related works are~\citet{cesista2026lora} and~\citet{bogachev2025lora}; these both attempt to approximate a low-rank solution to the spectral-regularized gradient descent problem. LoRA-Muon~\citep{cesista2026lora} is derived by approximating the spectral steepest-descent optimization problem using the triangle inequality applied to the spectral norm regularizer, and then minimizing to first order. We similarly linearize (we expect second-order terms to be suitably small), but we avoid the triangle inequality, as this is not necessarily tight under the spectral norm.
For their `Riemannion' optimizer,~\citet{bogachev2025lora} present the problem as Riemannian optimization on a constrained low-rank manifold. While some key ideas of this method are similar to our method (projecting onto the low-rank space), Riemannian optimization also involves a `retract' step. One issue with their retraction is requiring a truncated SVD, which cannot be efficiently replaced with an efficient matmul-only approximation. In practice this leads to Riemannion being slower than alternatives, especially for larger LoRA rank.
By contrast, we obtain the step using least squares on the orthogonalized gradient direction, avoiding expensive manifold constraints that don't appear to benefit model performance.
The PoLoRA~\citep{ghosh2026polora} method is again similar to our work, but can be understood as a version of LoRA-Muon with additional preconditioning.
LoRA-RITE~\citep{yen2024lora} and Riemannian-preconditioned LoRA~\citep{zhang2024riemannian} attempt to keep the adapters invariant to parameterization. Another popular approach is to simply apply Muon to $B$ and $A$ independently~\citep{qu2026can,janson2026stabilizing,mo2025parameter}. This ignores the geometry of the linear layer and causes the update to depend on the adapter parameterization, whereas our update is first-order parameterization-invariant (see App.~\ref{sec:transport}). Appendix~\ref{app:comparison} has a more detailed comparison of the various approaches.

\paragraph{Low-rank optimization beyond LoRA fine-tuning}
Some works, such as Spectron~\citep{janson2026stabilizing}, LORO~\citep{mo2025parameter}, GaLore~\citep{zhao2024galore}, and LoRA-Pre~\citep{wang2026taming} target efficient low-rank pretraining, either through stabilization, or optimizer state compression. \citet{he2025low} accelerate Muon itself with a low-rank orthogonalization routine that exploits the low-rank structure of the gradient, but operate on full dense weight matrices rather than in the LoRA adapter setting. In contrast to all of these, our goal is a simple, principled, memory-efficient Muon update for the composite LoRA adapter $\Delta W = BA$, rather than the full base weight $W_0$.

\section{Background}
\subsection{Muon}
Given a gradient matrix $G_W \in \mathbb R^{d_1 \times d_2}$ for weight matrix $W \in \mathbb R^{d_1 \times d_2}$, Muon selects the step direction $\delta W^\star \in \mathbb R^{d_1 \times d_2}$
through a regularized minimization under the spectral norm $\|\cdot\|_2$ (largest singular value)~\citep{bernstein2024old}:
\begin{equation}\label{eq:relaxed_muon}
  \delta W^\star =\argmin_{\delta W} \; \left[\langle G_W, \delta  W \rangle + \frac{\lambda}{2} \|\delta W\|_2^2\right].
\end{equation}
The solution, assuming $G_W$ is full rank, is proportional to $-\msign(G_W)$, where $\msign(X) = UV^\top$ for singular value decomposition $X = U\Sigma V^\top$, thereby orthogonalizing the gradient (or, in practice, the momentum). The solution is not unique for low-rank $G_W$, though we use the convention that $\msign(X) = U\operatorname{sign}(\Sigma) V^\top$, so that non-zero singular values are promoted, and the rest remain equal to zero.
\subsection{LoRA}
LoRA~\citep{hu2022lora} is a parameter-efficient method for fine-tuning neural networks.
LoRA adds adapters in the form $\Delta W = BA$ to the pretrained base weight $W_0$ of a layer, with $B \in \mathbb{R}^{d_1 \times r}$ and $A \in \mathbb{R}^{r \times d_2}$ where $r \ll d_1, d_2$. Typically $B$ is initialized to zero to ensure that, initially, $\Delta W = 0$. The factors $B,A$ are usually chosen by minimizing the fine-tuning loss with the AdamW optimizer. We instead seek to find the analogue to Muon's whitened update in the LoRA setting.
\section{Method}
Let $\Delta W = BA$ be the adapter with gradient $G_{\Delta W}$.
Our goal is to find updates $\delta B, \delta A$ for the factors $B,A$ such that the implied update $\delta \Delta W$ to the adapter $\Delta W$ has an approximately flat spectrum. We emphasize that simply orthogonalizing the factor updates $\delta B, \delta A$ does not imply that $\delta \Delta W$ itself is orthogonal. Under spectral steepest descent, the optimal adapter update $\delta \Delta W^\star$ is given by
\begin{equation}\label{eq:adapter_muon_full}
  \delta \Delta W^\star = \argmin_{\delta \Delta W}\ \big[\,\big\langle G_{\Delta W},\, \delta \Delta W\big\rangle
  + \tfrac{\lambda}{2}\big\|\delta \Delta W\big\|_2^2\,\big].
\end{equation}
If $r = \min(d_1,\,d_2)$ then $\Delta W$ is full rank and
this is an identical problem to Eq.~\eqref{eq:relaxed_muon}, which is solved by whitening the gradient~\citep{bernstein2024old}, i.e.,
\begin{equation}\label{eq:ideal_full_gradient}
    \delta \Delta W^\star = -\eta\msign(G_{\Delta W}),
\end{equation}
where proportionality constants have been absorbed into $\eta$.
However, we do not get the same solution under a low-rank constraint $r < \min(d_1, d_2)$. We have,
\begin{equation}
\delta \Delta W = (B + \delta B)(A + \delta A) - BA = \delta B\,A + B\,\delta A + \delta B\,\delta A ,
\end{equation}
but generally it is not possible to find $\delta B, \,\delta A$ that satisfy $ \delta \Delta W = -\eta \msign(G_{\Delta W})$.
The problem here is that the change $\delta \Delta W$ has rank at most $2r$, but $\msign(G_{\Delta W})$ is usually full rank.

We proceed by writing Eq.~\eqref{eq:adapter_muon_full} in terms of the adapter matrices,
\begin{equation}\label{eq:lora_exact_muon}
  (\delta B^\star, \delta A^\star) = \argmin_{\delta B, \delta A}\ \bigg[ \big\langle G_{\Delta W},\, \delta B\,A + B\,\delta A + \delta B\,\delta A\big\rangle
  + \tfrac{\lambda}{2}\big\|\delta B\,A + B\,\delta A + \delta B\,\delta A\big\|_2^2\bigg].
\end{equation}
This is a non-linear optimization problem due to the second-order terms, and it has no known closed-form solution.
For small updates $\delta B, \delta A$, we approximate by dropping these second-order terms, giving
\begin{equation}\label{eq:lora_param_muon}
  (\delta B^\star, \delta A^\star) = \argmin_{\delta B, \delta A}\ \bigg[\big\langle G_{\Delta W},\, \delta B\,A + B\,\delta A\big\rangle
  + \tfrac{\lambda}{2}\big\|\delta B\,A + B\,\delta A\big\|_2^2\bigg].
\end{equation}
Since it is generally not possible to achieve the rank-unconstrained minimum, we instead try to get as close as possible. We solve a least-squares problem for the linearized update map (overloading the notation $\delta B^\star,\, \delta A^\star$ for convenience),
\begin{equation}\label{eq:ls}
  (\delta B^\star, \delta A^\star) = \argmin_{\delta B,\,\delta A}\ \big\|\,\delta\Delta W^\star - (\delta B\,A + B\,\delta A)\,\big\|_F^2.
\end{equation}
Taking gradients of the objective w.r.t. $\delta B$ and $\delta A$ and setting to zero, we have,
\begin{subequations}
\begin{align}
0 &= -2(\delta \Delta W^\star - \delta B\, A - B\,\delta A)A^\top\label{eq:ddb},\\
0 &= -2B^\top(\delta \Delta W^\star - \delta B\, A - B\,\delta A)\label{eq:dda}.
\end{align}
\end{subequations}
Assuming $B$ and $A$ have full rank, right-multiplying Eq.~\eqref{eq:ddb} by $(AA^\top)^{-1}$ and left-multiplying Eq.~\eqref{eq:dda} by $(B^\top B)^{-1}$ gives
\begin{subequations}\label{eq:da_db_rearr}
\begin{alignat}{2}
\delta \Delta W^\star A^\dagger &= \delta B\, A A^\dagger+ B\,\delta A\,A^\dagger & &= \delta B + B \delta A\, A^\dagger,\label{eq:intermediatea}\\
B^\dagger\delta \Delta W^\star &= B^\dagger \delta B\, A + B^\dagger B\,\delta A & &= B^\dagger \delta B\, A + \delta A,\label{eq:intermediateb}
\end{alignat}
\end{subequations}
where $B^\dagger = (B^\top B)^{-1}B^\top $ and $A^\dagger = A^\top (AA^\top)^{-1}$ are the Moore-Penrose inverses satisfying $B^\dagger B = I = AA^\dagger$.
Note that the minimizer is not unique: if $(\delta B, \delta A)$ is a minimizer of Eq.~\eqref{eq:ls}, then so is $(\delta B + BS, \delta A - SA)$ since,
\begin{align}
\delta B \, A + B\,\delta A  = (\delta B  +BS) A + B\,(\delta A  - SA).
\end{align}
Rearranging Eq.~\eqref{eq:intermediateb},
and setting $S = -B^\dagger \delta B$ means that 
\begin{subequations}\label{eq:factors}
\begin{align}
\delta A^\star &= B^\dagger \delta \Delta W^\star\label{eq:factorsa}
\intertext{is one particular minimizer, and then substituting into Eq.~\eqref{eq:intermediatea} gives the corresponding $\delta B^\star$,}
\delta B^\star &= (I - BB^\dagger)\delta \Delta W^\star A^\dagger. \label{eq:factorsb}
\end{align}
\end{subequations}
\subsection{Low-rank gradients}\label{sec:efficient_compute}
While the solution $(\delta B^\star,\, \delta A^\star)$ given by Eq.~\eqref{eq:factors} is simple, it is not efficiently computed. The $\delta \Delta W^\star = -\eta \msign (G_{\Delta W})$ term requires orthogonalization of a large matrix (remember that $G_{\Delta W}\in \mathbb{R}^{d_1\times d_2}$), which we would like to avoid. Fortunately, we do not actually require the full gradient $G_{\Delta W}$, and can instead use the low-rank gradient,
\begin{equation}\label{eq:H}
  H := G_{\Delta W} - (I-P_B)\,G_{\Delta W}\,(I-P_A) = P_B G_{\Delta W} + (I - P_B)G_{\Delta W} P_A,
\end{equation}
where $P_B = BB^\dagger$ and $P_A = A^\dagger A$ are projections onto the column space of $B$ and row space of $A$ respectively. The low-rank gradient $H$ is the projection of the gradient onto the subspace of weight updates that the current LoRA adapter can express.
The two gradients $H$ and $G_{\Delta W}$ are first-order equivalent, in the sense that for any $\delta B, \delta A$, 
\begin{equation}
    \langle G_{\Delta W},\, \delta B\, A + B \delta A \rangle = \langle H, \delta B\, A + B \delta A \rangle.
\end{equation}
Thus when linearizing the Muon objective, we can replace $G_{\Delta W}$ with $H$ to obtain,
\begin{align}
(\delta B, \delta A) = \argmin_{\delta B, \delta A}\left[\langle H,\,\delta B\,A + B\,\delta A \rangle +  \tfrac{\lambda}{2}\|\delta B\,A + B\,\delta A\|_2^2\right].
\end{align}
In particular, this means we use $\delta\Delta W^\star = -\eta\msign(H)$. We show in the next section how this can be exploited to compute the update efficiently.


\renewcommand{\CommentSty}[1]{\textcolor{gray}{\texttt{#1}}}
\begin{figure}[t]
\begin{minipage}[t]{0.45\textwidth}
\begin{algorithm}[H]
\caption{sMuon}
\label{alg:conceptual}
\KwIn{$B \in \mathbb{R}^{d_1 \times r}$, $A \in \mathbb{R}^{r \times d_2}$; momenta $M_B, M_A$; lr $\eta$; weight decay $\lambda$}

\DontPrintSemicolon

\tcp{Low-rank momentum}
$H \leftarrow (B^\dagger)^\top M_A + (I - BB^\dagger)\, M_B\, (A^\dagger)^\top$\;

\tcp{Compute step}
$\delta A \leftarrow B^\dagger\, \msign(H)$\;
$\delta B \leftarrow (I - BB^\dagger)\, \msign(H)\, A^\dagger$\;
\tcp{Update parameters (split wd)}
$s \leftarrow \sqrt{1 - \lambda\eta}$\;
$B \leftarrow s\,B - \frac{\eta}{s}\,\delta B$\;
$A \leftarrow s\,A - \frac{\eta}{s}\,\delta A$\;

\tcp{Project momentum}
$M_B \leftarrow H\,A^\top$\;
$M_A \leftarrow B^\top H$\;
\end{algorithm}
\end{minipage}\hspace{0.75cm}
\begin{minipage}[t]{0.49\textwidth}
\begin{algorithm}[H]
\caption{Efficient/matmul-only sMuon}
\label{alg:matmul_only}
\KwIn{Same as Alg.~\ref{alg:conceptual}; jitter $\epsilon > 0$}

\DontPrintSemicolon
$S_B \leftarrow \mathtt{invroot}(B^\top B + \epsilon I)$ \; $S_A \leftarrow \mathtt{invroot}(AA^\top + \epsilon I)$\;
\tcp{Factor $H = BX + YA$}
$X \leftarrow S_B^2 M_A$ \; $Y \leftarrow M_B S_A^2 - B S_B^2(B^\top M_B S_A^2)$\;

\tcp{Bases for $\mathrm{col}(H)$, $\mathrm{row}(H)$}
$U_2 \leftarrow \msign(Y)$\;
$V_1 \leftarrow \msign(A^\top)$ \;
$V_2 \leftarrow \msign(X^\top\! - V_1(V_1^\top X^\top))$\;

\tcp{$2r \times 2r$ core}
$\Omega \leftarrow \msign\!\left(\begin{bmatrix}S_B M_A V_1 & S_B M_A V_2\\ (U_2^\top Y)(AV_1) & 0\end{bmatrix}\right)$\;

\tcp{Compute step}
$\delta A \leftarrow S_B(\Omega_{11}V_1^\top + \Omega_{12}V_2^\top)$\;
$\delta B \leftarrow U_2\,\Omega_{21} S_A$\;
\tcp{Update parameters}
$s \leftarrow \sqrt{1-\lambda\eta}$\;
$B' \leftarrow sB - \tfrac{\eta}{s}\delta B$, \; $A' \leftarrow sA - \tfrac{\eta}{s}\delta A$\;

\tcp{Project momentum}
$M_A \leftarrow ({B'}^\top B)X + ({B'}^\top Y)A$\;
$M_B \leftarrow B(X{A'}^\top) + Y(A{A'}^\top)$\;
$B \leftarrow B'$,  $A \leftarrow A'$\;
\end{algorithm}
\end{minipage}
\hspace{-1.5cm}
\label{fig:algorithms}
\end{figure}

\subsection{Efficient, matmul-only computation}
We now show how to compute the descent steps $\delta B^\star$ and $\delta A^\star$ efficiently. One of the principal advantages of our method versus Riemannion is that it permits a matmul-only solution --- Riemannion requires truncated SVDs during the retraction step, which makes parallelization/batching of computation slower (see Table~\ref{tab:speed}). In particular, the most expensive operations required are $d\times r$ matmuls, which are easily optimized with deep learning libraries.

The gradients for $B$ and $A$ are $G_B= G_{\Delta W} A^\top$ and $G_A = B^\top G_{\Delta W}$ respectively; the low-rank gradient of Eq.~\eqref{eq:H} can therefore be written as,
\begin{equation}\label{eq:H_factor_main}
  H = B\underbrace{(B^\top B)^{-1} G_A}_{=:\,X} \;+\; \underbrace{(I - P_B)\,G_B\,(AA^\top)^{-1}}_{=:\,Y}\, A,
\end{equation}
where $X\in\mathbb{R}^{r\times d_2}$ and
$Y\in\mathbb{R}^{d_1\times r}$. Assuming $B,\,A,\,X,\,Y$ have rank $r$,
we have $\operatorname{rank}(H)\le 2r$.

The key identity we exploit is as follows.
Let $U \in\mathbb{R}^{d_1\times 2r},\,V\in\mathbb{R}^{d_2\times 2r}$ be any matrices with orthonormal columns, such that the column space of $U$ contains the column space of $H$ and the column space of $V$ contains the row space of $H$, then\footnote{To see why Eq.~\eqref{eq:msign_lift} is true, note that since $UU^\top$ and $VV^\top$ act as the identity on the column and row spaces of $H$, we have $H = U(U^\top H V)V^\top$. Let $P, \Sigma,  Q^\top =\mathrm{SVD}(U^\top H V)$. Then $H = (UP)\Sigma(VQ)^\top$ is a valid, albeit truncated, SVD of $H$ ($UP$ and $VQ$ have orthonormal columns), so $\msign(H) = (UP)(VQ)^\top = U(PQ^\top)V^\top = U\,\msign(U^\top H V)\,V^\top$.},
\begin{equation}\label{eq:msign_lift}
  \msign(H)
  =
  U\,\msign(U^\top H V)\,V^\top.
\end{equation}
Eq.~\eqref{eq:msign_lift} lets us convert a $d_1\times d_2$ orthogonalization into a $2r\times 2r$ orthogonalization, and we will show shortly that instantiating $H \in \mathbb R^{d_1 \times d_2}$ can be avoided. To choose $U$ and $V$, note that
since $H=BX+YA$, its column space satisfies $\operatorname{col}(H) \subseteq \operatorname{col}(B)+\operatorname{col}(Y)$, while its row space satisfies $\operatorname{row}(H) \subseteq \operatorname{row}(A)+\operatorname{row}(X)$.
We therefore construct orthonormal bases for these two spaces as,
\begin{equation}
  U=\begin{pmatrix}U_1&U_2\end{pmatrix},
  \qquad
  V=\begin{pmatrix}V_1&V_2\end{pmatrix},
\end{equation}
with each block a polar factor (and therefore orthonormal columns) in the appropriate space,
\begin{subequations}\label{eq:polar_defs}
\begin{align}
  U_1 &= B(B^\top B)^{-1/2},
  &
  U_2 &= Y(Y^\top Y)^{-1/2},\\
  V_1 &= A^\top(AA^\top)^{-1/2},
  &
  V_2 &= Z(Z^\top Z)^{-1/2}, &\text{where } Z = (I-P_A)X^\top.
\end{align}
\end{subequations}
We have
$U_1^\top U_2=0$ since $B^\top Y =B^\top (I - P_B)G_B(AA^\top)^{-1} =0$, and $V_1^\top V_2=0$ since
$AZ=A(I - P_A)X^\top=0$. So $U$ and $V$ have orthonormal columns, satisfying the conditions for Eq.~\eqref{eq:msign_lift}.

Let $S_B=(B^\top B)^{-1/2}$ and 
$S_A=(AA^\top)^{-1/2}$. We calculate $U^\top H V$ efficiently,
\begin{align}\label{eq:cheap_uhv}
U^\top H V
&=
\begin{pmatrix}
U_1^\top B & U_1^\top Y\\
U_2^\top B & U_2^\top Y
\end{pmatrix}
\begin{pmatrix}
XV_1 & XV_2\\
AV_1 & AV_2
\end{pmatrix}
\notag\\
&=
\begin{pmatrix}
S_B^{-1} & 0\\
0 & U_2^\top Y
\end{pmatrix}
\begin{pmatrix}
XV_1 & XV_2\\
AV_1 & 0
\end{pmatrix}
\notag\\
&=
\begin{pmatrix}
S_B G_A V_1 & S_B G_A V_2\\
(U_2^\top Y)(AV_1) & 0
\end{pmatrix},
\end{align}
which avoids instantiating $H$ directly. Writing its orthogonalization in blocks of size $r \times r$,
\begin{align}
\Omega:=\msign(U^\top H V)
=
\begin{pmatrix}
\Omega_{11} & \Omega_{12}\\
\Omega_{21} & \Omega_{22}
\end{pmatrix},
\end{align}
we now calculate the final factor updates $\delta B^\star$ and $\delta A^\star$. Substituting $\msign(H)=U\Omega V^\top$
into the previous expressions for $\delta B^\star$ and $\delta A^\star$ (Eq.~\ref{eq:factors}) gives,
\begin{subequations}
\begin{align}
\delta A^\star =-\eta B^\dagger U\Omega V^\top
=
-\eta S_B\left(\Omega_{11}V_1^\top+\Omega_{12}V_2^\top\right),
\end{align}
where we used
$B^\dagger U_1=S_B$ and $B^\dagger U_2=0$.
Similarly,
\begin{align}
\delta B^\star
=
-\eta (I-P_B)U\Omega V^\top A^\dagger
=
-\eta U_2\,\Omega_{21}\,S_A,
\end{align}
\end{subequations}
using
$(I-P_B)U_1=0$, $(I-P_B)U_2=U_2$,
$V_1^\top A^\dagger=S_A$, and $V_2^\top A^\dagger=0$.

All components of $\delta B^\star$ and $\delta A^\star$ can be computed with matmuls, matrix inverse roots, and $\msign$. In practice, both the $r \times r$ inverse square roots and the $\msign$'s can be evaluated with matrix polynomial iterations~\citep{amsel2025polar,gramns2025}, which are themselves sequences of matrix products.
\begin{table}[t]
\centering
\caption{Per-eval accuracy (\%), grouped by base model. Each column is an optimizer (best run by validation loss); per row the best optimizer is bolded and cells are shaded green (best) to white (worst).}
\label{tab:results-all}
\hspace*{-1cm}
\begin{tabular}{lccccc}
\toprule
Eval & AdamW & Per-factor Muon & LoRA-Muon & Riemannion & sMuon (ours) \\
\midrule
\multicolumn{6}{l}{\textbf{Qwen2.5-3B}} \\
Piqa & \cellcolor[HTML]{FBFEFA} 81.7 & \cellcolor[HTML]{E3F4DF} 82.1 & \cellcolor[HTML]{FBFEFA} 81.7 & \cellcolor[HTML]{B9E1BA} \textbf{82.7} & \cellcolor[HTML]{FBFEFA} 81.7 \\
Winogrande & \cellcolor[HTML]{B9E1BA} \textbf{62.3} & \cellcolor[HTML]{F2FAEF} 61.2 & \cellcolor[HTML]{E3F4DF} 61.4 & \cellcolor[HTML]{FBFEFA} 61.0 & \cellcolor[HTML]{D0ECCD} 61.5 \\
OBQA & \cellcolor[HTML]{D0ECCD} 78.4 & \cellcolor[HTML]{B9E1BA} \textbf{79.2} & \cellcolor[HTML]{FBFEFA} 76.0 & \cellcolor[HTML]{E3F4DF} 77.4 & \cellcolor[HTML]{F2FAEF} 77.0 \\
Hellaswag & \cellcolor[HTML]{B9E1BA} \textbf{81.7} & \cellcolor[HTML]{E3F4DF} 78.7 & \cellcolor[HTML]{FBFEFA} 76.9 & \cellcolor[HTML]{F2FAEF} 77.6 & \cellcolor[HTML]{D0ECCD} 80.3 \\
BoolQ & \cellcolor[HTML]{B9E1BA} \textbf{67.7} & \cellcolor[HTML]{EDF8EA} 66.5 & \cellcolor[HTML]{FBFEFA} 65.6 & \cellcolor[HTML]{B9E1BA} \textbf{67.7} & \cellcolor[HTML]{D6EFD3} 67.4 \\
Arc-C & \cellcolor[HTML]{E3F4DF} 83.3 & \cellcolor[HTML]{B9E1BA} \textbf{83.6} & \cellcolor[HTML]{FBFEFA} 82.9 & \cellcolor[HTML]{E3F4DF} 83.3 & \cellcolor[HTML]{B9E1BA} \textbf{83.6} \\
MMLU & \cellcolor[HTML]{FBFEFA} 64.2 & \cellcolor[HTML]{E3F4DF} 64.9 & \cellcolor[HTML]{B9E1BA} \textbf{65.0} & \cellcolor[HTML]{E3F4DF} 64.9 & \cellcolor[HTML]{E3F4DF} 64.9 \\
\midrule
\multicolumn{6}{l}{\textbf{Llama-3.2-3B}} \\
Piqa & \cellcolor[HTML]{B9E1BA} \textbf{77.9} & \cellcolor[HTML]{D0ECCD} 77.1 & \cellcolor[HTML]{FBFEFA} 74.2 & \cellcolor[HTML]{E3F4DF} 76.9 & \cellcolor[HTML]{F2FAEF} 76.6 \\
Winogrande & \cellcolor[HTML]{FBFEFA} 51.0 & \cellcolor[HTML]{F2FAEF} 52.0 & \cellcolor[HTML]{D0ECCD} 53.8 & \cellcolor[HTML]{B9E1BA} \textbf{54.9} & \cellcolor[HTML]{E3F4DF} 52.4 \\
OBQA & \cellcolor[HTML]{E3F4DF} 67.2 & \cellcolor[HTML]{D0ECCD} 68.4 & \cellcolor[HTML]{FBFEFA} 63.4 & \cellcolor[HTML]{B9E1BA} \textbf{69.4} & \cellcolor[HTML]{F2FAEF} 66.8 \\
Hellaswag & \cellcolor[HTML]{E3F4DF} 63.7 & \cellcolor[HTML]{B9E1BA} \textbf{64.9} & \cellcolor[HTML]{FBFEFA} 58.9 & \cellcolor[HTML]{D0ECCD} 64.3 & \cellcolor[HTML]{F2FAEF} 63.3 \\
BoolQ & \cellcolor[HTML]{FBFEFA} 63.1 & \cellcolor[HTML]{E3F4DF} 63.4 & \cellcolor[HTML]{D0ECCD} 63.8 & \cellcolor[HTML]{B9E1BA} \textbf{64.1} & \cellcolor[HTML]{F2FAEF} 63.3 \\
Arc-C & \cellcolor[HTML]{B9E1BA} \textbf{67.9} & \cellcolor[HTML]{D0ECCD} 66.2 & \cellcolor[HTML]{FBFEFA} 60.5 & \cellcolor[HTML]{E3F4DF} 64.9 & \cellcolor[HTML]{F2FAEF} 63.2 \\
MMLU & \cellcolor[HTML]{EDF8EA} 54.4 & \cellcolor[HTML]{D6EFD3} 54.8 & \cellcolor[HTML]{FBFEFA} 52.7 & \cellcolor[HTML]{B9E1BA} \textbf{54.9} & \cellcolor[HTML]{EDF8EA} 54.4 \\
\midrule
\multicolumn{6}{l}{\textbf{DeepSeek-V2-Lite}} \\
Piqa & \cellcolor[HTML]{B9E1BA} \textbf{79.2} & \cellcolor[HTML]{EDF8EA} 76.7 & \cellcolor[HTML]{D6EFD3} 78.2 & \cellcolor[HTML]{FBFEFA} 75.5 & \cellcolor[HTML]{B9E1BA} \textbf{79.2} \\
Winogrande & \cellcolor[HTML]{EDF8EA} 58.2 & \cellcolor[HTML]{D6EFD3} 58.8 & \cellcolor[HTML]{B9E1BA} \textbf{59.7} & \cellcolor[HTML]{FBFEFA} 55.8 & \cellcolor[HTML]{EDF8EA} 58.2 \\
OBQA & \cellcolor[HTML]{EDF8EA} 68.0 & \cellcolor[HTML]{B9E1BA} \textbf{70.8} & \cellcolor[HTML]{D6EFD3} 69.0 & \cellcolor[HTML]{FBFEFA} 61.2 & \cellcolor[HTML]{B9E1BA} \textbf{70.8} \\
Hellaswag & \cellcolor[HTML]{D0ECCD} 74.4 & \cellcolor[HTML]{E3F4DF} 71.3 & \cellcolor[HTML]{F2FAEF} 71.2 & \cellcolor[HTML]{FBFEFA} 65.4 & \cellcolor[HTML]{B9E1BA} \textbf{76.4} \\
BoolQ & \cellcolor[HTML]{EDF8EA} 64.6 & \cellcolor[HTML]{FBFEFA} 64.1 & \cellcolor[HTML]{EDF8EA} 64.6 & \cellcolor[HTML]{D6EFD3} 64.7 & \cellcolor[HTML]{B9E1BA} \textbf{65.2} \\
Arc-C & \cellcolor[HTML]{B9E1BA} \textbf{70.6} & \cellcolor[HTML]{FBFEFA} 64.9 & \cellcolor[HTML]{B9E1BA} \textbf{70.6} & \cellcolor[HTML]{EDF8EA} 67.2 & \cellcolor[HTML]{D6EFD3} 67.6 \\
MMLU & \cellcolor[HTML]{B9E1BA} \textbf{55.4} & \cellcolor[HTML]{D0ECCD} 54.7 & \cellcolor[HTML]{FBFEFA} 53.1 & \cellcolor[HTML]{E3F4DF} 54.1 & \cellcolor[HTML]{F2FAEF} 53.5 \\
\midrule
\multicolumn{6}{l}{\textbf{Moonlight-16B-A3B}} \\
Piqa & \cellcolor[HTML]{E3F4DF} 84.7 & \cellcolor[HTML]{F2FAEF} 84.5 & \cellcolor[HTML]{D0ECCD} 85.6 & \cellcolor[HTML]{FBFEFA} 84.4 & \cellcolor[HTML]{B9E1BA} \textbf{86.2} \\
Winogrande & \cellcolor[HTML]{FBFEFA} 59.4 & \cellcolor[HTML]{F2FAEF} 61.2 & \cellcolor[HTML]{E3F4DF} 63.3 & \cellcolor[HTML]{D0ECCD} 64.0 & \cellcolor[HTML]{B9E1BA} \textbf{65.8} \\
OBQA & \cellcolor[HTML]{EDF8EA} 81.0 & \cellcolor[HTML]{D6EFD3} 81.2 & \cellcolor[HTML]{D6EFD3} 81.2 & \cellcolor[HTML]{FBFEFA} 78.6 & \cellcolor[HTML]{B9E1BA} \textbf{82.4} \\
Hellaswag & \cellcolor[HTML]{FBFEFA} 77.9 & \cellcolor[HTML]{F2FAEF} 80.5 & \cellcolor[HTML]{D0ECCD} 84.7 & \cellcolor[HTML]{E3F4DF} 82.3 & \cellcolor[HTML]{B9E1BA} \textbf{85.9} \\
BoolQ & \cellcolor[HTML]{F2FAEF} 68.4 & \cellcolor[HTML]{FBFEFA} 67.1 & \cellcolor[HTML]{B9E1BA} \textbf{69.8} & \cellcolor[HTML]{E3F4DF} 68.5 & \cellcolor[HTML]{D0ECCD} 69.5 \\
Arc-C & \cellcolor[HTML]{D0ECCD} 85.6 & \cellcolor[HTML]{FBFEFA} 81.9 & \cellcolor[HTML]{E3F4DF} 84.3 & \cellcolor[HTML]{F2FAEF} 83.3 & \cellcolor[HTML]{B9E1BA} \textbf{87.0} \\
MMLU & \cellcolor[HTML]{D0ECCD} 66.1 & \cellcolor[HTML]{E3F4DF} 65.9 & \cellcolor[HTML]{F2FAEF} 65.6 & \cellcolor[HTML]{B9E1BA} \textbf{66.2} & \cellcolor[HTML]{FBFEFA} 64.5 \\
\bottomrule
\end{tabular}
\end{table}

\subsection{Algorithm}
We combine the above updates with split weight decay~\citep{cesista2026lora} to match the weight decay dynamics of full-parameter training to first order, and momentum projection (inspired by momentum transport of~\cite{bogachev2025lora}, see App.~\ref{sec:transport} for details) to ensure reparameterization invariance, yielding the sMuon optimizer for LoRA adapters. We provide Alg.~\ref{alg:conceptual} as a pedagogical version, while Alg.~\ref{alg:matmul_only} incorporates the efficient matmul-only computations from the previous section.

We also discuss a more numerically stable variant in Appendix~\ref{app:stable}. One cause for concern is the practice of initializing the adapter $\Delta W=BA = 0$, and then computing inverse roots of $B^\top B$ and $AA^\top$.  We found that, despite initializing $\Delta W = 0$, calculating inverse roots of $AA^\top$ and $B^\top B$ was not an issue when using jitter and a semi-orthonormal $B$, $A=0$ initialization.
\FloatBarrier
\section{Experiments}
\begin{table}[t]
\centering
\caption{Code pass@1 (\%), grouped by base model. Each column is an optimizer (best run by validation loss); per row the best optimizer is bolded and cells are shaded green (best) to white (worst).}
\label{tab:results-all-code}
\hspace*{-1cm}
\begin{tabular}{lccccc}
\toprule
Eval & AdamW & Per-factor Muon & LoRA-Muon & Riemannion & sMuon (ours) \\
\midrule
\multicolumn{6}{l}{\textbf{Qwen2.5-3B}} \\
HumanEval & \cellcolor[HTML]{B9E1BA} \textbf{54.9} & \cellcolor[HTML]{FBFEFA} 20.1 & \cellcolor[HTML]{E3F4DF} 43.3 & \cellcolor[HTML]{F2FAEF} 31.1 & \cellcolor[HTML]{D0ECCD} 51.8 \\
HumanEval+ & \cellcolor[HTML]{B9E1BA} \textbf{48.8} & \cellcolor[HTML]{FBFEFA} 17.1 & \cellcolor[HTML]{E3F4DF} 39.0 & \cellcolor[HTML]{F2FAEF} 25.6 & \cellcolor[HTML]{D0ECCD} 43.9 \\
MBPP & \cellcolor[HTML]{EDF8EA} 65.9 & \cellcolor[HTML]{B9E1BA} \textbf{69.3} & \cellcolor[HTML]{D6EFD3} 68.0 & \cellcolor[HTML]{EDF8EA} 65.9 & \cellcolor[HTML]{FBFEFA} 62.7 \\
MBPP+ & \cellcolor[HTML]{E3F4DF} 57.1 & \cellcolor[HTML]{B9E1BA} \textbf{60.6} & \cellcolor[HTML]{D0ECCD} 58.7 & \cellcolor[HTML]{F2FAEF} 56.9 & \cellcolor[HTML]{FBFEFA} 53.4 \\
\midrule
\multicolumn{6}{l}{\textbf{Llama-3.2-3B}} \\
HumanEval & \cellcolor[HTML]{EDF8EA} 22.0 & \cellcolor[HTML]{FBFEFA} 18.3 & \cellcolor[HTML]{D6EFD3} 29.9 & \cellcolor[HTML]{B9E1BA} \textbf{30.5} & \cellcolor[HTML]{D6EFD3} 29.9 \\
HumanEval+ & \cellcolor[HTML]{F2FAEF} 19.5 & \cellcolor[HTML]{FBFEFA} 16.5 & \cellcolor[HTML]{E3F4DF} 25.0 & \cellcolor[HTML]{D0ECCD} 26.8 & \cellcolor[HTML]{B9E1BA} \textbf{27.4} \\
MBPP & \cellcolor[HTML]{E3F4DF} 43.1 & \cellcolor[HTML]{FBFEFA} 35.7 & \cellcolor[HTML]{F2FAEF} 42.1 & \cellcolor[HTML]{B9E1BA} \textbf{46.3} & \cellcolor[HTML]{D0ECCD} 45.2 \\
MBPP+ & \cellcolor[HTML]{E3F4DF} 35.4 & \cellcolor[HTML]{FBFEFA} 30.4 & \cellcolor[HTML]{F2FAEF} 34.1 & \cellcolor[HTML]{D0ECCD} 38.1 & \cellcolor[HTML]{B9E1BA} \textbf{38.6} \\
\midrule
\multicolumn{6}{l}{\textbf{DeepSeek-V2-Lite}} \\
HumanEval & \cellcolor[HTML]{B9E1BA} \textbf{36.6} & \cellcolor[HTML]{D0ECCD} 29.3 & \cellcolor[HTML]{FBFEFA} 22.0 & \cellcolor[HTML]{F2FAEF} 23.2 & \cellcolor[HTML]{E3F4DF} 26.8 \\
HumanEval+ & \cellcolor[HTML]{B9E1BA} \textbf{28.7} & \cellcolor[HTML]{D0ECCD} 23.8 & \cellcolor[HTML]{FBFEFA} 18.3 & \cellcolor[HTML]{F2FAEF} 19.5 & \cellcolor[HTML]{E3F4DF} 23.2 \\
MBPP & \cellcolor[HTML]{D0ECCD} 52.6 & \cellcolor[HTML]{B9E1BA} \textbf{53.7} & \cellcolor[HTML]{FBFEFA} 48.4 & \cellcolor[HTML]{F2FAEF} 49.5 & \cellcolor[HTML]{E3F4DF} 51.3 \\
MBPP+ & \cellcolor[HTML]{B9E1BA} \textbf{45.5} & \cellcolor[HTML]{D0ECCD} 45.0 & \cellcolor[HTML]{FBFEFA} 43.4 & \cellcolor[HTML]{F2FAEF} 43.7 & \cellcolor[HTML]{E3F4DF} 44.7 \\
\midrule
\multicolumn{6}{l}{\textbf{Moonlight-16B-A3B}} \\
HumanEval & \cellcolor[HTML]{F2FAEF} 49.4 & \cellcolor[HTML]{B9E1BA} \textbf{57.9} & \cellcolor[HTML]{E3F4DF} 53.7 & \cellcolor[HTML]{FBFEFA} 40.2 & \cellcolor[HTML]{D0ECCD} 54.9 \\
HumanEval+ & \cellcolor[HTML]{EDF8EA} 42.7 & \cellcolor[HTML]{B9E1BA} \textbf{48.8} & \cellcolor[HTML]{D6EFD3} 47.0 & \cellcolor[HTML]{FBFEFA} 35.4 & \cellcolor[HTML]{B9E1BA} \textbf{48.8} \\
MBPP & \cellcolor[HTML]{B9E1BA} \textbf{71.7} & \cellcolor[HTML]{E3F4DF} 71.2 & \cellcolor[HTML]{E3F4DF} 71.2 & \cellcolor[HTML]{FBFEFA} 68.8 & \cellcolor[HTML]{E3F4DF} 71.2 \\
MBPP+ & \cellcolor[HTML]{EDF8EA} 60.1 & \cellcolor[HTML]{EDF8EA} 60.1 & \cellcolor[HTML]{B9E1BA} \textbf{61.4} & \cellcolor[HTML]{FBFEFA} 58.7 & \cellcolor[HTML]{D6EFD3} 60.3 \\
\bottomrule
\end{tabular}
\end{table}

\begin{table}[t]
\centering
\caption{ReLoRA pretraining: final validation loss at each optimizer's optimal learning rate, where the optimal LR minimises the validation loss at the final logged step. Lower is better; the best optimizer is bolded and cells are shaded green (best) to white (worst).}
\label{tab:results-all-relora}
\hspace*{-1cm}
\begin{tabular}{lccccc}
\toprule
 & AdamW & Per-factor Muon & LoRA-Muon & Riemannion & sMuon (ours) \\
\midrule
Final val loss & \cellcolor[HTML]{FBFEFA} 3.83 & \cellcolor[HTML]{EDF8EA} 3.70 & \cellcolor[HTML]{D6EFD3} 3.68 & \cellcolor[HTML]{B9E1BA} \textbf{3.66} & \cellcolor[HTML]{B9E1BA} \textbf{3.66} \\
Optimal LR & 0.001 & 0.006 & 0.001 & 0.0006 & 0.0006 \\
\bottomrule
\end{tabular}
\end{table}

We compare sMuon to regular LoRA with AdamW, per-factor Muon (LoRA with Muon applied to each adapter weight $B$ and $A$ individually), LoRA-Muon~\citep{cesista2026lora}, and Riemannion~\citep{bogachev2025lora}. We evaluated on standard commonsense~\citep{clark2018think,clark2019boolq,zellers2019hellaswag,mihaylov2018openbookqa,bisk2020piqa,sakaguchi2021winogrande} and coding~\citep{chen2021humaneval,austin2021mbpp} benchmarks after training with SFT. We experimented with four models, Llama-3.2-3B~\citep{grattafiori2024llama}, Qwen2.5-3B~\citep{qwen2025qwen25}, DeepSeek-V2-Lite~\citep{deepseekai2024deepseekv2}, and Moonlight-16B-A3B~\citep{liu2025muon}. The first two models are dense models; the last two are sparse MoE models. The Moonlight model is of particular note because it was pretrained with Muon, whereas the others are all AdamW-pretrained. We also tested the methods in a pretraining setting using ReLoRA~\citep{lialin2023relora} which periodically merges and refreshes the adapter. We tuned the learning rate for each method and model, with example learning rate sweeps for the commonsense tasks shown in Appendix~\ref{app:lr_sweep}.

While our results are both model- and eval-dependent, sMuon performs strongly overall, especially when combined with the Moonlight-16B-A3B base model, a Muon-pretrained model (Tables~\ref{tab:results-all} and~\ref{tab:results-all-code}). sMuon also achieved the joint-best validation loss in the ReLoRA pretraining setting (Table~\ref{tab:results-all-relora}). We attribute performance gains, particularly over AdamW and per-factor Muon, to more accurately approximating the Muon objective. 

\paragraph{Experiment details}
LoRA-Muon and Riemannion were chosen as baselines in addition to the simpler AdamW and per-factor Muon baselines as these are the most similar to our method. Unfortunately, we could not find implementations for LoRA-Muon or Riemannion, so we implemented these ourselves.
For Riemannion and our method, we use an $A=0$ and $B$ orthonormal initialization, and for the remaining methods we used a standard initialization ($B=0$, $A$ random Gaussian). We scale the learning rate to approximately match AdamW RMS for all Muon optimizers (for details see App.~\ref{app:match_adamw_rms}). For all experiments, we used weight decay $0.01$, momentum $0.9$, inserted LoRA adapters into all hidden layers (except the router for models with MoE). We used GH200 for all experiments except to produce Table~\ref{tab:speed} for which we used A100. We used bf16 to store weights and optimizer states, but used fp32 to compute the step for Riemannion, sMuon, and LoRA-Muon.

For the ReLoRA experiments \citep{lialin2023relora}, we started with the modded-nanogpt codebase at `record 8'~\citep{jordan2024moddednanogpt}, which trains a 162M parameter transformer from scratch (using features including QK-norm, ReLU$^2$ activations) on the FineWeb dataset~\citep{penedo2024fineweb}, and evaluates on a 10M token validation set. We initialize LoRA adapters with rank 64, and train for a total of 3000 steps; every 500 steps we merge the adapters into the base weights and reinitialize the adapters; we use a cosine decay learning rate schedule, though after each merge+reinitialization, we also reset the learning rate to zero and warm back up to the target learning rate over 50 steps (this is also known as a jagged cosine schedule). We swept learning rates in 3e-4, 6e-4, 1e-3, 3e-3, 6e-3, 1e-2.

For the HumanEval~\citep{chen2021humaneval} and MBPP~\citep{austin2021mbpp} coding evals, we fine-tuned for 3 epochs on completing Python functions from the CodeAlpaca dataset~\citep{codealpaca}. For the MMLU~\citep{hendrycks2021mmlu} eval, we fine-tuned on a 50/50 mixture of FLAN~\citep{longpre2023flan} and MMLU training subset. For the remaining evals, we trained on a mixture of FLAN  and CommonsenseQA~\citep{talmor2019commonsenseqa}, and training subset for each eval. All SFT evals were 0-shot. For SFT training runs, we used $\alpha=32$, $r=16$, gradient clipping (clipping at $1$), and we used roughly $2^{16}$ tokens per batch.

\paragraph{Muon-based LoRA is strongest on the Muon-pretrained model}
We originally hypothesized that Muon-based LoRA methods might perform better with Moonlight, since it is a Muon-pretrained model. We do find some evidence for this when aggregating over all benchmarks in Tables~\ref{tab:results-all} and~\ref{tab:results-all-code}: AdamW only achieves top performance on one of the eleven tasks, while sMuon is best for six. Some methods are markedly below others on certain tasks (e.g. per-factor Muon with Llama-3.2-3B on HumanEval, Table~\ref{tab:results-all-code}). This could be due to optimizer instabilities or noise in the benchmarks or selection procedure.

\paragraph{Benefits of Muon pretraining remain with ReLoRA}
The ReLoRA experiment (Table~\ref{tab:results-all-relora}) removes optimizer
mismatch entirely by training from scratch. The separation between
optimizer families is clear. Every Muon-based method achieves lower validation loss than LoRA+AdamW. One explanation for the strong Muon performance is amplification of the small but real signal in the long tail of the gradient's spectrum~\citep{wang2025muon}.

\begin{table}[t]
\centering
\caption{All Muon optimizers are slightly slower than AdamW, but Riemannion adds huge overhead at larger rank. We show mean time for LoRA adapter optimizer step (ms, 2 s.f.), as well as the proportion (\%) of the total step time. Figures were obtained while training a 12 layer GPT model with $d_\text{model}=768$. We ran these timings on an A100, calculating the mean time over 25 steps, with different ranks $r\in\{2^4, 2^5, 2^6, 2^7\}$, batch size 128, 4 gradient accumulation steps. }\label{tab:speed}
\hspace*{-1cm}
\setlength{\tabcolsep}{4pt}
\begin{tabular}{lcccc}
\toprule
Step time, ms (\% total) $\downarrow$ & $r=16$ & $r=32$ & $r=64$ & $r=128$ \\
\midrule
AdamW & $0.66$ \scriptsize($0.047$\%) & $0.59$ \scriptsize($0.041$\%) & $0.65$ \scriptsize($0.046$\%) & $0.78$ \scriptsize($0.054$\%) \\
Per-factor Muon & $1.4$ \scriptsize($0.10$\%) & $1.6$ \scriptsize($0.11$\%) & $2.0$ \scriptsize($0.14$\%) & $3.6$ \scriptsize($0.25$\%) \\
LoRA-Muon & $4.5$ \scriptsize($0.32$\%) & $4.8$ \scriptsize($0.33$\%) & $6.3$ \scriptsize($0.44$\%) & $10$ \scriptsize($0.70$\%) \\
Riemannion & $90$ \scriptsize($6.0$\%) & $300$ \scriptsize($17$\%) & $700$ \scriptsize($33$\%) & $1700$ \scriptsize($54$\%) \\
sMuon (ours) & $8.7$ \scriptsize($0.62$\%) & $8.9$ \scriptsize($0.62$\%) & $12$ \scriptsize($0.80$\%) & $20$ \scriptsize($1.4$\%) \\
\midrule
Fwd+bwd, ms & $1400$ & $1432$ & $1429$ & $1459$ \\
\bottomrule
\end{tabular}
\end{table}

\paragraph{Efficiency gain from the matmul-only implementation}
Table~\ref{tab:speed} compares step times for the different optimizers. AdamW is the fastest; the only computation required by AdamW is element-wise, meaning that data communication is likely to be the largest overhead. Per-factor Muon is the next fastest, because computation is dominated by two $\msign(\cdot)$ computations (that can be batched). Both LoRA-Muon and sMuon require more matmuls than per-factor Muon, but the compute time is still negligible compared to the forward+backward pass time per step of $\sim1.4$s.
On the other hand, computing the Riemannion step is much slower, adding $\sim 50\%$ overhead to the forward+backward pass when $r=64$, and even longer for $r=128$. The computation is slow because Riemannion requires thin QRs and $2r\times 2r$ SVDs, which add overhead for larger $r$. Note that the retraction step in Riemannion specifically requires truncated SVD, which cannot be replaced by a more efficient Newton-Schulz algorithm. We found that compiling the core computation (i.e. the matmuls), and batching the computation leads to big efficiency gains with all the Muon methods except Riemannion.
\section{Conclusion}
We considered optimizing low-rank adapters using an approximate form of Muon. The resulting step is available in closed form, is invariant to the adapter parameterization $(B,A)\mapsto(BS,S^{-1}A)$ and is cheap, costing $O(r^2(d_1+d_2)+r^3)$ time and $O(r\max(d_1,d_2))$ memory, using less optimizer state than LoRA+AdamW. Our implementation, sMuon, has empirically flatter spectra than per-factor Muon and out-performs other methods on the Moonlight Muon-pretrained model. In ReLoRA pretraining every Muon-based method beats AdamW, with sMuon and Riemannion the strongest. In general, by respecting the geometry of the full layer, rather than treating $B$ and $A$ as unrelated matrices, we often benefit, and there is little extra cost.
\FloatBarrier
\bibliographystyle{plainnat}
\bibliography{refs}
\FloatBarrier
\appendix
\newpage
\section{Other algorithmic details}\label{app:other_alg_details}
We discuss some pre-existing techniques that we incorporated into sMuon.
\subsection{Match AdamW RMS}\label{app:match_adamw_rms}
Inspired by~\citet{liu2025muon}, we adopt a learning rate adjustment. The full-parameter scaling is $0.2 \sqrt{\max(d_\text{in}, d_\text{out})}$, with the overall update, including weight decay, becoming
\begin{align}x \leftarrow x - \lambda \eta x - 0.2 \eta\sqrt{\max (d_\text{in}, d_\text{out})}\, \cdot\, \delta x.
\end{align}
 To match this in the LoRA setting, we used
\begin{align}
  x \leftarrow x - \lambda \eta x - 0.2 \eta\sqrt{d_\text{in}d_\text{out}/r}\, \cdot\, \delta x.
\end{align}
We loosely justify this heuristic: assume the weight update $\delta \Delta W$ is orthogonalized. The rank  of $\delta \Delta W$  is at most $2r$ (it can be written as the difference of two rank $r$ matrices), and therefore $\|\delta \Delta W\|_F^2\leq 2r$.
So $\|\delta\Delta W\|_\text{rms}\leq  \sqrt{2r/(d_\text{in}d_\text{out})}$. We absorb the $\sqrt{2}$ factor, and use a scale of $0.2\sqrt{d_\text{in}d_\text{out}/r}$.
\subsection{Split weight decay}\label{sec:wd}
If we apply weight decay to $B$ and $A$ separately, this gives different dynamics compared to full-parameter training. Concretely,
\begin{align*}
  A_{t+1} &= (1-\eta\lambda)A_t - \eta\, \delta A, \quad  B_{t+1} = (1-\eta \lambda)B_t - \eta\,\delta B\\
\implies \delta \Delta W &= \big[(1-\eta\lambda)B_t - \eta\,\delta B\big]\big[(1-\eta\lambda)A_t - \eta\,\delta A\big] - B_tA_t\\
&= -\big[1 - (1-\eta\lambda)^2\big]B_tA_t - \eta(1-\eta\lambda)\big(B_t\,\delta A + \delta B\,A_t\big) + O(\eta^2),
\end{align*}
which is not recognizable in terms of full-parameter dynamics.
We adopt the split weight decay of~\citet{cesista2026lora} to preserve the first-order weight decay dynamics. Setting $s = \sqrt{1 - \lambda\eta}$, LoRA factors update as,
\begin{align}
  A_{t+1} &= s\,A_t - \tfrac{\eta}{s}\,\delta A, \\
  B_{t+1} &= s\,B_t - \tfrac{\eta}{s}\,\delta B.
\end{align}
Expanding the resulting weight change $\delta \Delta W = B_{t+1}A_{t+1} - B_tA_t$ gives,
\begin{align}
  \delta \Delta W &= \left(s B_t - \tfrac{\eta}{s}\,\delta B\right)\left(s A_t - \tfrac{\eta}{s}\,\delta A\right) - B_t A_t \\
           &= s^2 B_t A_t - \eta(\delta B\, A_t + B_t\, \delta A) + \tfrac{\eta^2}{s^2}\,\delta B\,\delta A - B_t A_t \\
           &= (s^2 - 1) B_t A_t - \eta(\delta B\, A_t + B_t\, \delta A) + \tfrac{\eta^2}{s^2}\,\delta B\,\delta A \\
           &= \underbrace{-\lambda\eta\, \Delta W_t}_\text{decay term} - \underbrace{\eta(\delta B\, A_t + B_t\, \delta A)}_\text{gradient term} + O(\eta^2)
\end{align}
which we identify as having the dynamics of decoupled weight decay~\citep{loshchilov2017decoupled}.
\subsection{Invariance to parameterization}\label{sec:transport}
In practice, we orthogonalize a momentum rather than a raw gradient. The natural choice for momentum accumulation is $M_A \leftarrow \beta M_A + (1-\beta)G_A$ and $M_B \leftarrow \beta M_B + (1-\beta)G_B$, and to reconstruct the effective momentum by substituting them for $G_A,\, G_B$ in the equation factorizing $H$,
\begin{equation}\label{eq:H_recon}
  \bar H_t = (B_t^\dagger)^\top M_{A,t} + (I-P_{B_t})\,M_{B,t} (A_t^\dagger)^\top .
\end{equation}
Here, $G_B = G_{\Delta W} A^\top$ and $G_A = B^\top G_{\Delta W}$ are the gradients of the adapter matrices $B$ and $A$ respectively. 
Consider two training runs, $(B_t,\, A_t)$ and $(B_t', A_t')$, that differ only in their parameterizations at each step,
\begin{equation}\label{eq:two_runs}
  (B_t', A_t') = (B_tS_t,\; S_t^{-1}A_t).
\end{equation}
Since $\Delta W_t' = B_t'A_t' = B_tA_t = \Delta W_t$, we want the optimizer to take equivalent steps throughout training, regardless of $S_t$. For this, it is necessary that $\bar H_t' = \bar H_t$.

\paragraph{Without transporting momentum buffers} The adapter gradients in the primed run are $G_{A,k}' = (B_k')^\top G_{\Delta W,k} = S_k^\top G_{A,k}$ and $G_{B,k}' = G_{\Delta W,k} (A_k')^\top = G_{B,k}S_k^{-\top}$. Unrolling the EMA, we have,
\begin{equation}\label{eq:unrolled_ema}
  M_{A,t}' = \sum_{k\le t}(1-\beta)\beta^{t-k}\,S_k^\top G_{A,k}, \qquad
  M_{B,t}' = \sum_{k\le t}(1-\beta)\beta^{t-k}\,G_{B,k}S_k^{-\top}.
\end{equation}
Thus, the `primed' run's effective momentum is,
\begin{equation}\label{eq:mixed_frame_method}
  \bar H_t' = \sum_{k\le t}(1-\beta)\beta^{t-k}\Big[(B_t^\dagger)^\top\,S_t^{-\top}S_k^\top\,G_{A,k}
  \;+\;(I-P_{B_t})\,G_{B,k}\,S_k^{-\top}S_t^\top\,(A_t^\dagger)^\top\Big],
\end{equation}
which differs from $\bar H_t$ by the factors $S_t^{-\top}S_k^\top$ and $S_k^{-\top}S_t^\top$. These are the identity only when $S_k = S_t$, so in general $\bar H_t' \neq \bar H_t$.
\paragraph{With momentum transport} We fix the issue by re-deriving the buffers from $\bar H$ against the factors as they stand at the end of the step,
\begin{equation}\label{eq:transport_project}
  M_{A} \leftarrow B^\top \bar H, \qquad M_{B} \leftarrow \bar H A^\top .
\end{equation}
Geometrically, this projects $\bar H$ onto the space in which our updates, $\delta B\,A + B\,\delta A$, exist. 
The momenta at the next step are then,
\begin{align}\label{eq:transport_single_frame}
  M_{A,t+1} &= B_{t+1}^\top\big(\beta \bar H_t + (1-\beta)G_{\Delta W,t+1}\big),\\
  M_{B,t+1} & = \big(\beta \bar H_t + (1-\beta)G_{\Delta W,t+1}\big)A_{t+1}^\top.
\end{align}
Assuming $\bar H_t' = \bar H_t$, we have $M_{A,t+1}' = (B_{t+1}')^\top \big(\beta \bar H_t + (1-\beta)G_{\Delta W,t+1}\big) = S_{t+1}^\top M_{A,t+1}$ and similarly $M_{B,t+1}' = M_{B,t+1}S_{t+1}^{-\top}$. By further substitution,
\begin{align*}
  \bar H_{t+1}' &= \big(S_{t+1}^{-1}B_{t+1}^\dagger\big)^\top S_{t+1}^\top M_{A,t+1}
    + (I-P_{B_{t+1}})\,M_{B,t+1}S_{t+1}^{-\top}\big(A_{t+1}^\dagger S_{t+1}\big)^\top \nonumber\\
  &= (B_{t+1}^\dagger)^\top \underbrace{S_{t+1}^{-\top}S_{t+1}^\top}_{I} M_{A,t+1}
    + (I-P_{B_{t+1}})\,M_{B,t+1}\underbrace{S_{t+1}^{-\top}S_{t+1}^\top}_{I}(A_{t+1}^\dagger)^\top\nonumber\\
  \;&=\; \bar H_{t+1}.
\end{align*}
So long as we project $\bar H$ into the momentum buffers at every step, the two runs will orthogonalize the same matrix, and therefore the realized steps will agree.

\section{Alternative stable algorithm}\label{app:stable}
See Alg.~\ref{alg:matmul_only_stable}, a mathematically equivalent but more stable alternative to Alg.~\ref{alg:matmul_only} from the main text. In particular, we perform two orthogonalization steps~\citep{giraud2005rounding} to help preserve orthogonality to machine precision, and include relative jitter to help stability for larger $r$. Another difference between Alg.~\ref{alg:matmul_only} and~\ref{alg:matmul_only_stable} is that we use slightly different $U_2$, $V_2.$
Alg.~\ref{alg:matmul_only_stable} uses $U_2 = \msign(\tilde Y)$, where $Y = \tilde Y S_A^2$, and $V_2 = \msign(\tilde Z)$, where $Z = \tilde Z S_B^2$. This is possible because the property we need is $\operatorname{col}(Y) = \operatorname{col}(\tilde Y)$, and similar for $Z$/$\tilde Z$, which is true when $S_B$ and $S_A$ are invertible (we assume this to be the case).

One can also consider additional interventions, such as replacing $\msign(X) = X(X^\top X)^{-1/2}$ for $X\in\mathbb{R}^{d\times r}$ with $X\,\mathtt{invroot}(X^\top X)$. The latter is slightly faster for small $r$ (since $X^\top X\in\mathbb{R}^{r\times r}$), but we found the benefits to be small in practice.

\begin{algorithm}[H]
\caption{Stable sMuon}
\label{alg:matmul_only_stable}
\KwIn{Same as Alg.~\ref{alg:conceptual}; jitter $\epsilon > 0$ (default to $10^{-4}$)}
\DontPrintSemicolon
\tcp{Inverse roots}
$D_B \leftarrow B^\top B$\;
$\epsilon_B \leftarrow \epsilon\max(\mathrm{tr}(D_B)/r, 1)$\;
$S_B \leftarrow \mathtt{invroot}(D_B+\epsilon_B I)$\;
$D_A \leftarrow AA^\top$\;
$\epsilon_A \leftarrow \epsilon\max(\mathrm{tr}(D_A)/r, 1)$\;
$S_A \leftarrow \mathtt{invroot}(D_A+\epsilon_A I)$\;
\tcp{Bases}
$X \leftarrow S_B^2 M_A$\;
$V_1 \leftarrow \msign(A^\top)$\;

$\tilde Y \leftarrow M_B - B\big(S_B^2(B^\top M_B)\big)$\;
$\tilde Y \leftarrow \tilde Y - B\big(S_B^2(B^\top \tilde Y)\big)\quad$\tcp{two orthog. steps}
$U_2 \leftarrow \msign(\tilde Y)$, \quad $Y \leftarrow \tilde Y\,S_A^2$\;
$\tilde Z \leftarrow M_A^\top - V_1(V_1^\top M_A^\top)$\;
$V_2 \leftarrow \msign(\tilde Z - V_1(V_1^\top \tilde Z))\quad$\tcp{two orthog. steps}
$C \leftarrow \begin{bmatrix} S_B(M_A V_1) & S_B(M_A V_2)\\ (U_2^\top Y)(A V_1) & 0 \end{bmatrix}$\;
$\Omega \leftarrow \msign(C)$\;

\tcp{Step direction}

$P \leftarrow \tfrac12\big(AV_1 + (AV_1)^\top\big)$\tcp{symmetric matrix}
$T \leftarrow \tfrac12\big(P S_A^2 + (P S_A^2)^\top\big)\quad$\tcp{symmetric matrix}

$\delta A \leftarrow S_B(\Omega_{11}V_1^\top + \Omega_{12}V_2^\top)$\;
$\delta B \leftarrow U_2\,\Omega_{21}\,T$\;

\tcp{Update parameters}
$s \leftarrow \sqrt{1-\lambda\eta}$\;
$B' \leftarrow sB - \tfrac{\eta}{s}\delta B$\;
$A' \leftarrow sA - \tfrac{\eta}{s}\delta A$\;

\tcp{Project momentum}
$M_A \leftarrow ({B'}^\top B)X + ({B'}^\top Y)A$\;
$M_B \leftarrow B(X{A'}^\top) + Y(A{A'}^\top)$\;
$B \leftarrow B'$\;
$A \leftarrow A'$\;
\end{algorithm}
\section{Comparison of Muon-based LoRA optimizers}\label{app:comparison}
Every method that combines Muon with LoRA has the same obstruction: Muon requires a step along
$\msign(G_{\Delta W})$, but a gradient step for a LoRA adapter is constrained inside the set $\mathcal T=\{Z:(I-P_B)Z(I-P_A)=0\}$. The ideal step $\msign(G_{\Delta W})$ is not in general within this set, thus different methods make different approximations. We compare the different methods in Table~\ref{tab:muon-lora-geometry}.
\begin{table}[htbp]
\centering
\footnotesize
\setlength{\tabcolsep}{3.5pt}
\caption{Comparison of the Muon-based LoRA optimizers, all written in the convention $\Delta W=BA$ of
this paper. $\Pi_{\mathcal T}(\cdot)$ is the orthogonal projector onto the realizable step
directions.}
\label{tab:muon-lora-geometry}
\makebox[\linewidth][c]{%
\begin{tabular}{@{}>{\raggedright\arraybackslash}p{1.6cm}
                  >{\raggedright\arraybackslash}p{3.4cm}
                  >{\raggedright\arraybackslash}p{3.8cm}
                  >{\raggedright\arraybackslash}p{2.0cm}
                  >{\raggedright\arraybackslash}p{3.3cm}@{}}
\toprule
Method & $\delta\Delta W$ & Momentum & Weight decay on $\Delta W$ & Expensive operations \\
\midrule
Per-factor Muon\newline\citep{qu2026can,kang2026uniform}
  & $-\eta\big(\msign(M_B)A+B\,\msign(M_A)\big)+O(\eta^2)$
  & Standard per-factor EMAs, i.e. accumulate $G_B$ in $M_B$ ($M_B\!\leftarrow\!\beta M_B+(1{-}\beta)G_B$), likewise for $M_A$
  & Decoupled, per factor: $\Delta W\mapsto(1-\lambda\eta)^2\Delta W$
  & 2 orthogonalizations\\
\addlinespace
Spectron\newline\citep{janson2026stabilizing}
  & As above with $\eta$ replaced by $\rho=\eta/(\|B\|_2{+}\|A\|_2{+}1)$; only $\|\delta\Delta W\|_2\le\eta$ is controlled, no direction matched
  & Standard per-factor EMAs
  & Decoupled, per factor
  & 2 orthogonalizations, 2 power iterations to estimate $\|B\|_2,\|A\|_2$ \\
\addlinespace
LoRA-Muon\newline\citep{cesista2026lora}
  & $-\tfrac{\eta}{2}\big(\msign(P_BG_{\Delta W})+\msign(G_{\Delta W}P_A)\big)+O(\eta^2)$
  & Standard per-factor EMAs
  & Split: let $s=\sqrt{1-\lambda\eta}$, and $A \leftarrow s A - \eta / s\, \delta A$ (same for $B$). Overall gives $-\lambda\eta\,\Delta W + O(\eta^2)$ (Sec.~\ref{sec:wd})
  & The same 2 orthogonalizations, 2 $r\times r$ inverse square roots of $(B^\top B)^{-1/2}$ and $(AA^\top)^{-1/2}$ \\
\addlinespace
Riemannion\newline\citep{bogachev2025lora}
  & $\mathrm{SVD}_r\big(\Delta W-\eta\,\Pi_{\mathcal T}(\msign( H))\big)-\Delta W$
  & $A$ and $B$ semi-orthonormal. Accumulate $(I-BB^\top)G_B$ in $M_B$, and accumulate $G_A$ in $M_A$. Let $ H=BM_A+(I-BB^\top)M_BA$. After the retraction, set $M_B=(I-BB^\top) HA^\top$, $M_A=B^\top H$
  & Prior to retraction, $A$ is decayed via $-\eta\gamma\,A$; $B$ is not decayed.
  & 4 thin QRs, 2 SVDs ($2r{\times}2r$) \\
\addlinespace
Ours
  & $-\eta (I - P_B) \msign(H) P_A - \eta P_B \msign(H)   +O(\eta^2)$
  & Standard per-factor EMAs, but let $H=(B^\dagger)^\top M_A + (I - P_B) M_B (A^\dagger)^\top$. After the gradient step, set $M_B =  H  A^\top $, $M_A = B^\top  H$
  & Split
  & 2 $r{\times}r$ inverse square roots, 4 orthogonalizations ($d\times r$, worst case) \\
\bottomrule
\end{tabular}}
\end{table}
\FloatBarrier
\section{Optimality of the sMuon step}\label{app:targeting_H}
We show in Lemma~\ref{lem:step} that our particular choice of $\delta A$ and $\delta B$ (a) is, to first order, as close as possible to the target step $-\eta\msign(H)$, and (b) while the minimum is not unique in terms of steps $(\delta B,\, \delta A)$, it is unique in terms of the first-order change to the whole linear layer.
\begin{lemma}\label{lem:step}
Let $B\in\mathbb R^{d_1\times r}$ have full column rank and $A\in\mathbb R^{r\times d_2}$ have full row rank.
Let $P_B = BB^\dagger$ and $P_A = A^\dagger A$, let $G_{\Delta W}$ be the adapter gradient, and let $H = G_{\Delta W} - (I-P_B)\,G_{\Delta W}\,(I-P_A)$.
Write $\delta\Delta W = (B+\delta B)(A+\delta A) - BA$ and $\delta\Delta W^\star = -\eta\,\msign(H)$.
Consider updates of the form $\delta B = O(\eta)$, $\delta A = O(\eta)$, then
\begin{equation}\label{eq:step_expand}
\|\delta\Delta W^\star - \delta\Delta W\|_F \;=\; \,\|\eta\msign(H) + \delta B\,A + B\delta A\|_F \;+\; O(\eta^2),
\end{equation}
\begin{enumerate}
\item\label{lem:step_ls} The choice $\delta A = -\eta\,B^\dagger\,\msign(H)$, $\delta B = -\eta\,(I-P_B)\,\msign(H)\,A^\dagger$
minimizes $\|\delta\Delta W^\star - \delta\Delta W\|_F$ to first order.
\item\label{lem:step_gauge} The first-order minimizer is not unique. If $(\delta B, \,\delta A)$ is a minimizing pair, then so is $(\delta B - BS,\ \delta A + SA)$ with $S\in\mathbb R^{r\times r}$, provided that $S=O(\eta)$.
Every first-order minimizer realizes the same overall update,
\begin{equation}\label{eq:step_update}
\delta\Delta W \;=\; -\eta\big[P_B\msign(H) + (I-P_B)\msign(H)P_A\big] + O(\eta^2).
\end{equation}
\end{enumerate}
\end{lemma}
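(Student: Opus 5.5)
First I will establish the expansion \eqref{eq:step_expand}. Writing $\delta\Delta W = \delta B\,A + B\,\delta A + \delta B\,\delta A$, the quadratic term satisfies $\|\delta B\,\delta A\|_F \le \|\delta B\|_F\|\delta A\|_F = O(\eta^2)$. Hence $\delta\Delta W^\star - \delta\Delta W = -(\eta\msign(H) + \delta B\,A + B\,\delta A) - \delta B\,\delta A$, and the reverse triangle inequality for $\|\cdot\|_F$ gives \eqref{eq:step_expand}. This reduces minimisation to first order to the linear least-squares problem \eqref{eq:ls} with target $-\eta\msign(H)$. The problem is to project $-\eta\msign(H)$ orthogonally, in the Frobenius inner product, onto the subspace $\mathcal T = \{\delta B\,A + B\,\delta A\}$.

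Next I will characterise $\mathcal T$ and its projector. Since $B$ has full column rank and $A$ has full row rank, $\mathcal T = \{Z : (I-P_B)Z(I-P_A) = 0\}$. The inclusion $\subseteq$ is immediate. For $\supseteq$, decompose $Z = P_B Z + (I-P_B)ZP_A$ and write $P_B Z = B(B^\dagger Z)$ and $(I-P_B)ZP_A = ((I-P_B)ZA^\dagger)A$. Because $P_B$ and $P_A$ are symmetric idempotents, the map $\Pi_{\mathcal T}(Z) = Z - (I-P_B)Z(I-P_A) = P_B Z + (I-P_B)ZP_A$ is self-adjoint and idempotent in the trace inner product. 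It is therefore the orthogonal projector onto $\mathcal T$, and the residual $Z - \Pi_{\mathcal T}Z$ is orthogonal to $\mathcal T$. This step is the main point to check carefully. In particular, the self-adjointness uses $\langle (I-P_B)Z(I-P_A), W\rangle = \langle Z, (I-P_B)W(I-P_A)\rangle$.

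With the projector in hand, part \ref{lem:step_ls} follows by substitution. For the proposed pair, $B\,\delta A = -\eta P_B\msign(H)$ and $\delta B\,A = -\eta(I-P_B)\msign(H)P_A$. Their sum is $-\eta\,\Pi_{\mathcal T}(\msign(H))$, the unique closest point of $\mathcal T$ to $-\eta\msign(H)$, so the pair attains the least-squares minimum. This also agrees with the normal equations derived earlier in \eqref{eq:factors}.

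For part \ref{lem:step_gauge}, I will observe that $(\delta B - BS)A + B(\delta A + SA) = \delta B\,A + B\,\delta A$. The linear term is therefore unchanged, and since $S = O(\eta)$ the perturbed factors remain $O(\eta)$, so \eqref{eq:step_expand} still applies. Uniqueness of the realised update follows from uniqueness of the orthogonal projection. Any first-order minimiser has linear part exactly $-\eta\,\Pi_{\mathcal T}(\msign(H))$. Adding the $O(\eta^2)$ cross term $\delta B\,\delta A$ gives \eqref{eq:step_update}.
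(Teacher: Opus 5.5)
Your proposal is correct and follows essentially the same route as the paper. Both arguments rest on the Frobenius-orthogonal splitting induced by the symmetric projectors $P_B$ and $P_A$. The paper writes it as a four-block Pythagorean lower bound $\|M+bA+Ba\|_F^2\ge\|(I-P_B)M(I-P_A)\|_F^2$, attained exactly when the other three blocks vanish; you write it as the self-adjoint idempotent $\Pi_{\mathcal T}$ being the orthogonal projector onto $\mathcal T$. Your proof of the reverse inclusion $\{Z:(I-P_B)Z(I-P_A)=0\}\subseteq\mathcal T$ is harmless extra work, which the paper avoids by exhibiting the minimizing pair directly.
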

\begin{proof}
Since $B^\dagger B = I = AA^\dagger\in\mathbb{R}^{r\times r}$, we have $(I-P_B)B = 0$ and $A(I-P_A) = 0$.
Write $M = \msign(H)$, $\delta B = \eta b$, $\delta A = \eta a$ with $a,b$ bounded independently of $\eta$. Then
$$\delta\Delta W^\star - \delta\Delta W \;=\; -\eta\,(M + bA + Ba) +O(\eta^2)$$
and the reverse triangle inequality gives Eq.~\eqref{eq:step_expand}.
It therefore suffices to minimize the leading coefficient $\|M + bA + Ba\|_F$.

Any $X \in \mathbb{R}^{d_1 \times d_2}$ admits the decomposition
$$X = P_B X P_A + P_B X (I-P_A) + (I-P_B) X P_A + (I-P_B) X (I-P_A),$$
a sum of four mutually Frobenius-orthogonal terms. We have $(I - P_B)(bA + Ba)(I - P_A) = 0$,  so
$$\|M + bA + Ba\|_F^2 \;\geq\; \|(I-P_B)\,M\,(I-P_A)\|_F^2.$$
Crucially, the RHS is independent of the choice of $b,\,a$.
We get equality when the other three blocks of the decomposition are zero, which they
are for $M + bA + Ba$,
\begin{align*}
P_B &(M + bA + Ba) P_A + P_B (M + bA + Ba)(I - P_A) + (I - P_B) (M + bA + Ba)P_A\\
&= P_B (M + bA + Ba) \;+\; (I - P_B) (M + bA + Ba) P_A\\
&= \big[P_B M - P_B M\big] \;+\; \big[(I-P_B) M P_A - (I-P_B) M P_A\big] \;=\; 0,
\end{align*}
where we substituted the choice of Part~\ref{lem:step_ls},
$$bA = -(I-P_B)M\,\big[A^\dagger A\big] = -(I-P_B)MP_A, \qquad Ba = -\big[BB^\dagger\big]\,M = -P_BM,$$
and used $P_B(I-P_B) = 0$, $(I-P_B)^2 = I-P_B$, $P_A^2 = P_A$, etc..
Since $\|\delta B\|_F, \|\delta A\|_F = O(\eta)$, this pair lies in the domain and attains equality, so it minimizes the leading coefficient, and hence $\|\delta\Delta W^\star - \delta\Delta W\|_F$ to first order.

For Part~\ref{lem:step_gauge}, equality holds only when the three blocks vanish. Since the fourth block of $bA + Ba$ is identically zero by $(I-P_B)B = 0$ and $A(I-P_A) = 0$, the three conditions force
$$bA + Ba \;=\; -\,P_B M - (I-P_B) M P_A \qquad \text{for every minimizing pair.}$$
Consequently, every first-order minimizer satisfies
$$\delta\Delta W \;=\; \eta\,(bA + Ba) + \delta B\,\delta A \;=\; -\eta\big[P_B M + (I-P_B) M P_A\big] + O(\eta^2),$$
which is Eq.~\eqref{eq:step_update}. Finally, if $S = O(\eta)$, the pair $(\delta B - BS,\ \delta A + SA)$ remains $O(\eta)$ and satisfies
$$(\delta B - BS)\,A + B\,(\delta A + SA) = \delta B\,A + B\,\delta A,$$
so it has the same leading coefficient and is again a first-order minimizer.
\end{proof}

\FloatBarrier
\section{Learning rate curves}\label{app:lr_sweep}
We show learning rate curves for the commonsense runs in Figures~\ref{fig:cs-accuracy-grid} (overall commonsense accuracy) and~\ref{fig:cs-val-loss-grid} (val. loss). We swept different learning rates for different optimizers, since we found that the optimal learning rate varies between optimizers. The accuracy figures shown in the main text are chosen by taking the optimal learning rate according to loss on a held-out validation set.
\begin{figure}[t]
  \centering
  \includegraphics[width=\linewidth]{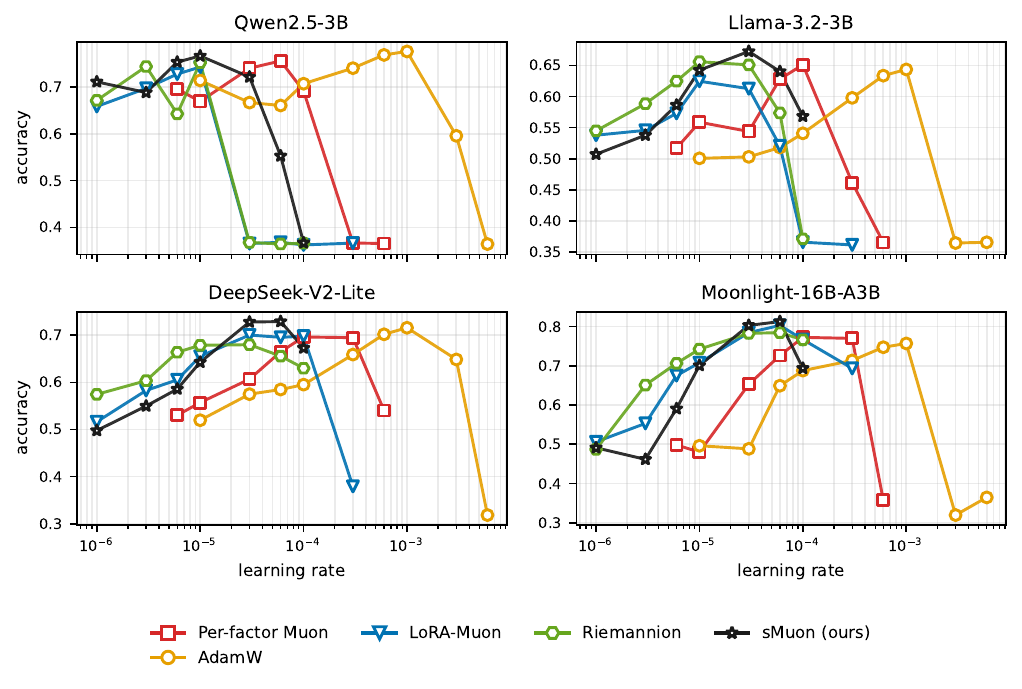}
  \caption{Common-sense accuracy against learning rate for each base model.
  Each curve is one optimizer; accuracy is the overall common-sense benchmark
  accuracy at the final training step.}
  \label{fig:cs-accuracy-grid}
\end{figure}

\begin{figure}[t]
  \centering
  \includegraphics[width=\linewidth]{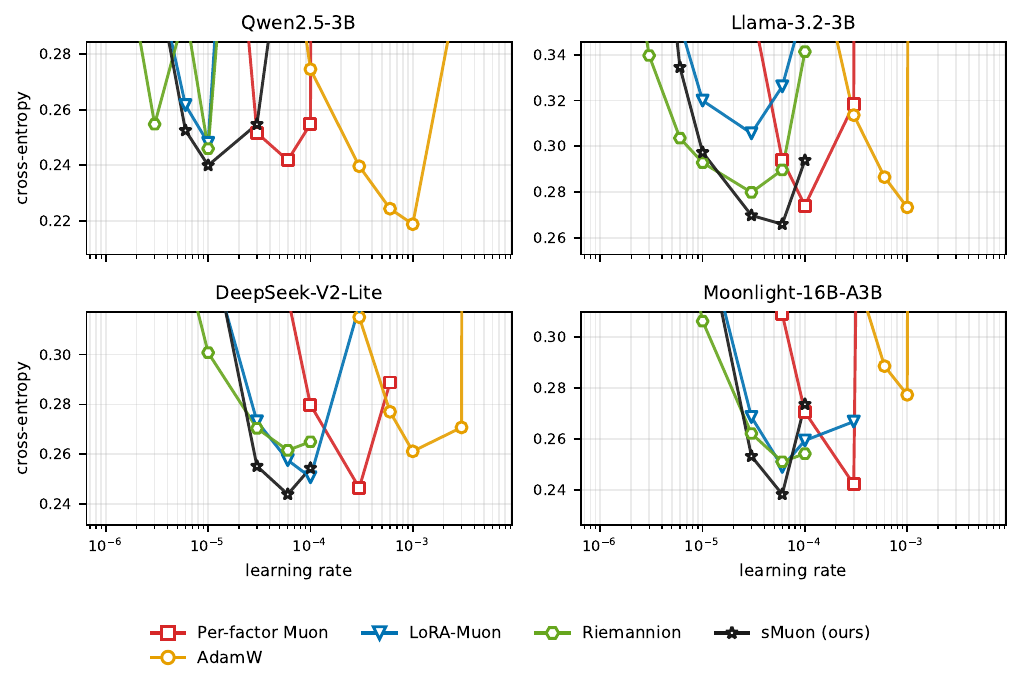}
  \caption{Validation loss against learning rate for each base model, over the
  same runs as Figure~\ref{fig:cs-accuracy-grid}.}
  \label{fig:cs-val-loss-grid}
\end{figure}
\end{document}